\newtheorem{theorem}{Theorem}%
\newtheorem{definition}{Definition}%
\begin{document}

%%
%% Rights management information.
%% CC-BY is default license.
\copyrightyear{2024}
\copyrightclause{Copyright for this paper by its authors. Use permitted under Creative Commons License Attribution 4.0 International (CC BY 4.0).}

%%
%% This command is for the conference information
\conference{AEQUITAS 2024: Workshop on Fairness and Bias in AI | co-located with ECAI 2024, Santiago de Compostela, Spain}

%%
%% The "title" command
\title{Data Quality Dimensions for Fair AI}

\tnotemark[1]
%\tnotetext[1]{You can use this document as the template for preparing your publication. We recommend using the latest version of the ceurart style.}

%%
%% The "author" command and its associated commands are used to define
%% the authors and their affiliations.
\author[1]{Camilla Quaresmini}[%
orcid=0000-0002-6474-1284,
email=camilla.quaresmini@polimi.it,
]
\cormark[1]
\fnmark[1]
\address[1]{Department of Electronics, Information and Bioengineering, Politecnico di Milano, Piazza Leonardo da Vinci 32, 20133, Milan, Italy}

\author[2]{Giuseppe Primiero}[%
orcid=0000-0003-3264-7100,
email=giuseppe.primiero@unimi.it,
]
\fnmark[1]
\address[2]{LUCI Lab and PhilTech Research Center, Department of Philosophy, Università degli Studi di Milano, Via Festa del Perdono 7, 20122, Milan, Italy and MIRAI, Srl.}

%% Footnotes
\cortext[1]{Corresponding author.}
\fntext[1]{These authors contributed equally.}

%%
%% The abstract is a short summary of the work to be presented in the
%% article.
\begin{abstract}
Artificial Intelligence (AI) systems are not intrinsically neutral and biases trickle in any type of technological tool. In particular when dealing with people, the impact of AI algorithms' technical errors originating with mislabeled data is undeniable. As they feed wrong and discriminatory classifications, these systems are not systematically guarded against bias. In this article we consider the problem of bias in AI systems from the point of view of data quality dimensions. We highlight the limited model construction of bias mitigation tools based on accuracy strategy, illustrating potential improvements of a specific tool in gender classification errors occurring in two typically difficult contexts: the classification of non-binary individuals, for which the label set becomes incomplete with respect to the dataset; and the classification of transgender individuals, for which the dataset becomes inconsistent with respect to the label set. Using formal methods for reasoning about the behavior of the classification system in presence of a changing world, we propose to reconsider the fairness of the classification task in terms of completeness, consistency, timeliness and reliability, and offer some theoretical results.
\end{abstract}

%%
%% Keywords. The author(s) should pick words that accurately describe
%% the work being presented. Separate the keywords with commas.
\begin{keywords}
  Bias mitigation \sep
  Fairness \sep
  Information Quality \sep
  Mislabeling \sep
  Timeliness
\end{keywords}

%%
%% This command processes the author and affiliation and title
%% information and builds the first part of the formatted document.
\maketitle

\section{Introduction}\label{sec1}

Machine Learning (ML) models trained on huge amounts of data are intrinsically biased when dealing with people. Common face recognition systems used in surveillance tasks generate false positives labeling innocent people as suspects. Social credit systems link individuals to the state of their social credit, making decisions based on that score. %Systems of this kind also confuse biological sex with gender, feeding wrong and discriminatory classifications.
In all of those cases, subjects suffer a credibility deficit due to prejudices related to their social identity \cite{Fricker2007-FRIEIP}: a dark-skinned man could be characterized by a higher risk of recidivism after being arrested; a short-haired skinny young woman -- or a long-haired boy with feminine traits -- might be the target of transphobic attacks following misgendering. Through the deployment of these technologies, society makes the gap separating rich from poor, cisnormative from non-cisnormative individuals, more constitutive as automatized and standardized.

Already before the explosion of ML algorithms, \cite{DBLP:journals/tois/FriedmanN96} offered a framework for understanding three categories of bias in computer systems, assuming the absence of bias as necessary to define their quality. Later on, the emergence of contemporary, data-driven AI systems based on learning has significantly worsened the situation, see e.g. \cite{mehrabi2022survey,DBLP:conf/fat/BuolamwiniG18}. %\cite{DBLP:conf/fat/BuolamwiniG18} shows cases of algorithmic discrimination based on gender and race. In particular, analysing three commercial gender classification systems, they found gender reductionism and accuracy disparities in the classification of darker/lighter and female/male individuals, showing that dark-skinned females are the most misclassified group. 
%\cite{mehrabi2022survey} lists different types of bias in AI applications, based on the identification of two underlying potential sources of unfairness in ML systems: data and algorithms. %\cite{miceli} investigate the power dynamics and the political nature of data in AI, based on the assumption that data is always biased, being the product of subjective and unbalanced social dynamics. 
%Furthermore, the most well-known AI datasets are full of labeling errors \cite{northcutt2021pervasive}, something which distorts our perception of progress related to the field. %\cite{hildebrandt21} summarizes bias types in three categories: inevitable, inherent bias; bias with problematic ethical effects, based on observation of inference and affecting training data and system throughput; unlawful bias, i.e. bias based on prohibited grounds.
%
On this basis, the development and deployment of fairer Artificial Intelligence (AI) systems has been increasingly demanded. %Fairness itself is a social construct which depends on the contexts of application \citep{bellamy2018ai}, and it is also difficult to choose a definition among those provided in the literature \citep{mehrabi2022survey,verma}, many of which are known to be mutually incompatible \citep{friedler,impossibility,berk,klein,chouldechova}.
Such request appears especially relevant in certain application contexts. For example, as examined in \cite{Hanna2021}, face is commonly used as a legitimate mean of gender classification, and this is operazionalized and automatized in technologies such as Automatic Gender Recognition (AGR), which algorithmically derives gender from faces' physical traits to perform classification \cite{Keyes,redu}. This technique relies on the assumption that gender identity can be computationally derived from facial traits. 
However, a recent study \cite{scheuerman} shows that the most famous AGR systems are not able to classify non-binary genders, also performing poorly on transgender individuals. This is due to the fact that AGR incapsulates a binary, cisnormative conception of gender, adopting a male/female scheme which invalidates non-binary identities.

We declare ourselves against the use of gender classification, as considering face as a proxy for detecting gender identity seems to resonate with phrenology %, i.e. the study of subjects’ skull to predict mental traits and character, 
and physiognomy,
%, i.e. assessing someone’s personality from their face appearance
%. Even if those pseudosciences have been largely proven to be anti-scientific, the idea that gender can be read from faces is still embedded into technology.
%Far from being directly deducible from images, gender is a theoretical construct, i.e. a property which can only be inferred from other observable features. Thus, 
and we believe that the process of automatic gender recognition can easily lead to mismatches between the theoretical understanding of constructs underlying identity and their operationalization \cite{wallach},
%. This is particularly problematic 
especially when it comes to classification of individuals who recognise themselves outside of binarism.
%. And, as shown, assessing gender bias for non-binary and transgender identities is different from assessing gender bias from a classical gender binary-centric perspective \cite{facct1}.
%However, as 
However, we note that this kind of classification is already happening \cite{aliha}, spreading with commercial systems offering gender classification as a standard feature, causing a huge impact on the lives of misgendered individuals. Therefore there are contexts in which it is potentially inevitable that classification exists, and in these contexts it must be fairer. This translates into asking whether there is a strategy to ensure that the labels assigned during classification are as less stereotypical and archetypal as possible. While this paper does not investigates the ethical aspects of AGR, we aim at addressing the issues related to the classification strategies to make them fairer, as an initial study to prepare for implementing mitigation strategies. 
%Thus, when we talk about fairness, we refer to the classification strategy.

An important task, common to technology and philosophy, is therefore the identification and verification of criteria that may help developing fairness conditions for AI systems. %This is especially true for systems of machine vision that implement FR algorithms.
While a number of techniques are available to mitigate bias, their primary focus on purely statistical analysis to control accuracy across sensitive classes is clearly insufficient to control social discrimination. A different approach is represented by the explicit formulation of ethical principles to be verified across protected attributes, combining statistical measures with logical reasoning, as formally defined in \cite{DBLP:conf/atal/DAsaroP21, DBLP:conf/aiia/PrimieroD22, DBLP:conf/lori/TerminePD21, dasaro2024checking, KUBYSHKINA2024109212} and implemented by the BRIO tool in \cite{DBLP:conf/beware/CoragliaDGGPPQ23,coraglia2024evaluating}. In this latter context, an important direction to explore for a refined definition of ethically-laden verification criteria is the study of quality dimensions and associated biases. In the following of this paper, we offer a theoretical contribution in this direction, preparing the ground for a future implementation. We argue that, even if maximizing data quality and fairness simultaneously can be hard as improving one can deteriorate the other \cite{AzzaliniCCCDST23}, the task of bias mitigation tools can be supported by reasoning on quality dimensions that so far have been left ignored. In particular, we offer examples to show how dimensions of consistency, completeness, timeliness and reliability can be used to establish fairer AI classification systems. This research is in line with the quest for integrating useful empirical metrics on fairness in AI with asking key (conceptual) questions, see \cite{DBLP:journals/ethicsit/Scantamburlo21}.

The paper is structured as follows. In Section \ref{sec:biases} we offer an overview of fainess definitions and bias types relevant for this work. In Section \ref{sec2} we briefly overview the technical details of a particular bias mitigation tool to illustrate what we consider essential limitations of purely statistical analyses. In Section \ref{sec4} we introduce data quality dimensions arguing for reconsidering their relevance in the task of evaluating the fairness of classification systems, presenting two examples to justify this requirement. In Section \ref{sec5} we propose a definition of fair AI classification that includes such dimensions and formulate some theoretical results. Section \ref{sec6} concludes the work illustrating future research lines.

\section{Fairness and Bias in ML}\label{sec:biases}

Despite a unique definition missing in the literature \cite{DBLP:journals/tois/FriedmanN96,mehrabi2022survey,DBLP:journals/corr/abs-1104-3913,GrgicHlaca2016TheCF,bellamy2018ai,biasmitigationaif,kusner2018counterfactual,DBLP:journals/corr/HardtPS16}, fairness is often presented as corresponding to the avoidance of bias \cite{feuerriegel}. This can be formulated at two distinct levels: first, identifying and correcting problems in datasets \cite{kamiran,10.1145/3552433,weerts2023fairlearn,calmon2017optimized,feldman2015certifying,pmlr-v28-zemel13}, as a model trained with a mislabeled dataset will provide biased outputs; second, correcting the algorithms \cite{GrgicHlaca2016TheCF,kamishima}, as even in the design of algorithms biases can emerge \cite{Movingbeyond}. In the present section we are interested in considering datasets and their labels. Indeed, bias may also affect the label set \cite{sengamedu2023fairlabel,jiang2019identifying}. Accordingly, we talk about \textit{label quality bias} when errors hit the quality of labels. As shown in \cite{northcutt2021pervasive}, the most well-known AI datasets are full of labeling errors. A crucial task is therefore the development of conceptual strategies and technical tools to mitigate bias emergence in both data and label sets.

A variety of approaches and contributions is available in the literature focusing on identifying bias in datasets and labels. Here we list the types of bias which are relevant to the present work, see Table \ref{table:symbols}. Albeit not exhaustive, these lists of biases represent a good starting point to investigate quality dimensions required to address them.
%
%In the present work we thus proceed as follows. We start recognizing that bias in data can lead to biased output by ML algorithms. Then, 
We now analyze a common mitigation strategy used by existing tools addressing the issue of bias in data, showing their limitations. We then study the bias in the classification algorithm  (i.e., bias in labels) of the mitigation tool.

\begin{table}
\begin{center}
\begin{minipage}{\textwidth}
\caption{Data and Label Bias.}\label{biasmerged}
\begin{tabular}{>{\raggedright}p{0.3\textwidth} p{0.45\textwidth} p{0.2\textwidth}}  % Custom column widths
\toprule
Bias type & Definition & Literature \\
\midrule
\multicolumn{3}{c}{\textbf{Data Bias}} \\
\midrule
\textit{Behavioral bias} & User's behavior can be different across contexts & \cite{olteanu2019social} \\
\textit{Exclusion bias} & Systematic exclusion of some data & \cite{fabbrizzi} \\
\textit{Historical bias} & Cultural prejudices are included into systematic processes & \cite{suresh2021} \\
\textit{Time interval bias} & Data collection in a too limited time range & \cite{nexus} \\
\midrule
\multicolumn{3}{c}{\textbf{Label Bias}} \\
\midrule
\textit{Chronological bias} & Distortion due to temporal changes in the world which data are supposed to represent & \cite{fabbrizzi} \\
\textit{Historical bias} & Cultural prejudices are included into systematic processes & \cite{suresh2021} \\
\textit{Misclassification bias} & Data points are assigned to incorrect categories & \cite{catalogue} \\
\bottomrule
\end{tabular}
\end{minipage}
\end{center}
\end{table}

\section{Mitigating Bias}\label{sec2}

A \textit{bias mitigation algorithm} is a procedure for reducing unwanted bias in training datasets or models, with the aim to improve the fairness metrics. Those algorithms can be classified into three categories \cite{d_Alessandro_2017}: pre-processing, when the training data is modified; in-processing, when the learning algorithm is modified; post-processing, when the predictions are modified.

Several tools are available to audit and  mitigate biases in datasets, thereby attempting to implement diversity and to reach fairness. Among the most common are AIF360 \cite{bellamy2018ai},
%\footnote{Code available at \url{https://github.com/Trusted-AI/AIF360}. See \cite{biasmitigationaif} for a comparative study on bias mitigation with AIF360.}
Aequitas \cite{saleiro2018aequitas}
%\footnote{Code available at \url{https://github.com/dssg/aequitas}.} 
and Cleanlab \cite{northcutt2021confidentlearning}.
%\footnote{Code available at \url{https://github.com/cleanlab/cleanlab}.} %While AIF360 and Aequitas focus on examining and reporting bias and discrimination, referring to a precise idea of fairness, Cleanlab's priority is data cleaning.
Recently a post-hoc evaluation model for bias mitigation has been proposed by the tool BRIO \cite{DBLP:conf/beware/CoragliaDGGPPQ23,coraglia2024evaluating}. In this article, we consider Cleanlab as a testbed, illustrating below in Section \ref{sec4} its limitations in view of data quality dimensions.  Instead, we propose a theoretical frame for the resolution of such limitation in Section \ref{sec5}, further illustrating the possibility to implement the present analysis in the tool BRIO. For an overview of the symbols used from now on, see Table \ref{table:symbols}.

\begin{table}[t]
\caption{Symbols used in the present work.}\label{table:symbols}
\begin{center}
\begin{tabular}{>{\raggedright}p{0.25\textwidth} p{0.7\textwidth}} % Adjusted column widths
\toprule
  $t_n$ & Time index \\ 
  $\mathcal{T}:=\{t_{1}, \dots, t_{n}\}$ & Time frame \\ 
  $d$ & Generic datapoint \\ 
  $i,j,l$ & Data labels \\ 
  $y^*$ & Discrete random variable correctly labeled \\ 
  $\Tilde{y}$ & Discrete random variable wrongly labeled \\ 
  $[m]$ & The set of unique class labels \\ 
  $y^* \to \Tilde{y}$ & A mapping between variables \\ 
  $p_{\mathcal{T}}[(\tilde{y}=i)_{t_{n}}\mid(y^*=j)_{t_{n-m}}]$ & The probability of label $i$ being wrong at time $t_{n}$, given that label $j$ was correct at time $t_{n-m}$ \\ 
  $C_{\tilde{y},y^{*}} [i,j,\mathcal{T}]$ & Temporal confident joint, where the correct label can change from $i$ to $j$ in time frame $\mathcal{T}$ \\ 
  $C_{\tilde{y},y^*} [i,\mathcal{T}]$ & Temporal confident joint, where the correctness of the same fixed label $i$ can change in time frame $\mathcal{T}$ \\ 
  $\varepsilon$ & Change rate \\ 
  $\hat{p'}(\tilde{y};x_i;\theta)$ & Predicted probability of label $\Tilde{y}$ for variable $x_i$ and model parameters $\theta$ \\ 
  $L$ & Label set \\ 
  $X$ & AI system \\ 
  $L_{t_{1}}:=\{l_{1}, \dots,l_{n}\}$ & Partition of the label set \\ 
  $P$ & Population of interest \\ 
  $p$ & An element from $P$ \\ 
  $d(X)_{\mathcal{T}}$ & A datapoint in system $X$ over time frame $\mathcal{T}$ \\ 
  $y^*(d)$ & A correct label for the datapoint $d$ \\ 
  $\pi$ & Threshold variable \\ 
\bottomrule
\end{tabular}
\end{center}
\end{table}

\begin{comment}
\begin{table}[t]
\caption{Symbols used in the present work.}\label{table:symbols}
\begin{tabularx}{\textwidth}{p{0.25\textwidth}X}
\toprule
  $t_n$ & Time index \\ 
  $\mathcal{T}:=\{t_{1}, \dots, t_{n}\}$ & Time frame \\ 
  $d$ & Generic datapoint \\ 
  $i,j,l$ & Data labels \\ 
  $y^*$ & Discrete random variable correctly labeled \\ 
  $\Tilde{y}$ & Discrete random variable wrongly labeled \\ 
  $[m]$ & The set of unique class labels \\ 
  $y^* \to \Tilde{y}$ & A mapping between variables \\ 
  $p_{\mathcal{T}}[(\tilde{y}=i)_{t_{n}}\mid(y^*=j)_{t_{n-m}}]$ & The probability of label $i$ being wrong at time $t_{n}$, given that label $j$ was correct at time $t_{n-m}$ \\ 
  $C_{\tilde{y},y^{*}} [i,j,\mathcal{T}]$ & Temporal confident joint, where the correct label can change from $i$ to $j$ in time frame $\mathcal{T}$ \\ 
  $C_{\tilde{y},y^*} [i,\mathcal{T}]$ & Temporal confident joint, where the correctness of the same fixed label $i$ can change in time frame $\mathcal{T}$ \\ 
  $\varepsilon$ & Change rate \\ 
  $\hat{p'}(\tilde{y};x_i;\theta)$ & Predicted probability of label $\Tilde{y}$ for variable $x_i$ and model parameters $\theta$ \\ 
  $L$ & Label set \\ 
  $X$ & AI system \\ 
  $L_{t_{1}}:=\{l_{1}, \dots,l_{n}\}$ & Partition of the label set \\ 
  $P$ & Population of interest \\ 
  $p$ & An element from $P$ \\ 
  $d(X)_\mathcal{T}$ & A datapoint in system $X$ over time frame $\mathcal{T}$ \\ 
  $y^*(d)$ & A correct label for the datapoint $d$ \\ 
  $\pi$ & Threshold variable \\ 
\bottomrule
\end{tabularx}
\end{table}
\end{comment}

Cleanlab is a %deep learning 
framework to find label errors in datasets. %\footnote{\url{https://l7.curtisnorthcutt.com/cleanlab-python-package}.}
It uses Confident Learning (CL), an approach which focuses on label quality with the aim to address uncertainty in dataset labels using three principles: counting examples that are likely to belong to another class using the confident joint and probabilistic thresholds to find label errors and to estimate noise; pruning noisy data; and ranking examples to train with confidence on clean data. %\footnote{For details on Rank Pruning algorithm see \cite{northcutt2017rankpruning} and \url{https://github.com/cgnorthcutt/rankpruning}.}
The three approaches are combined by an initial assumption of a class-conditional noise process, to directly estimate the joint distribution between noisy given labels and uncorrupted unknown ones. For every class, the algorithm learns the probability of it being mislabeled as any other class. This assumption may have exceptions but it is considered reasonable. For example, a ``cat" is more likely to be mislabeled as ``tiger" than as ``airplane". This assumption is provided by the classification noise process (CNP, \cite{DBLP:journals/ml/AngluinL87}), which leads to the conclusion that the label noise only depends on the latent true class, not on the data. 
CL \cite{northcutt2021confidentlearning} exactly finds label errors in datasets by estimating the joint distribution of noisy and true labels. The idea is that when the predicted probability of an example is greater than a threshold per class, we confidently consider that example as actually belonging to the class of that threshold, where the thresholds for each class are the average predicted probability of examples in that class. Given $\Tilde{y} \in [m]$ takes an observed, noisy label (potentially flipped to an incorrect class); and $y* \in [m]$ takes the unknown (latent), true, uncorrupted label (latent true label), 
%\begin{enumerate}
%\item $\Tilde{y}$ = Discrete random variable $\Tilde{y} \in [m]$ takes an observed, noisy label (potentially flipped to an incorrect class);
%\item $ y*$ = Discrete random variable $y* \in [m]$ takes the unknown (latent), true, uncorrupted label (latent true label);
%\item $[m]$ = The set of unique class labels,
%\end{enumerate}
CL assumes that for every example it exists a correct label $y*$ and defines a class-conditional noise process mapping $y*\to{\tilde{y}}$, such that every label in class $ j\in[m] $ may be independently mislabeled as class $i\in[m]$, with probability
$p(\tilde{y}=i\mid y*=j)$. So, maps are associations of data to wrong labels. Then CL estimates $p(\tilde{y}\mid y*)$ and $p(y*)$ jointly, evaluating the joint distribution of label noise $p(\tilde{y},y*)$ between noisy given labels and uncorrupted unknown labels. CL aims to estimate every $p(\Tilde{y},y*)$ as a matrix $Q_{\Tilde{y},y*}$ to find all mislabeled examples $x$ in dataset $X$, where $ y*\neq\Tilde{y}$. Given as inputs the out-of-sample predicted probabilities $\hat{P}_{k,i}$ and the vector of noisy labels $\Tilde{y}_k$, the procedure is divided into three steps: estimation of $\hat{Q}_{\Tilde{y},y*}$ to characterize class-conditional label noise, filtering of noisy examples, training with the errors found.

To estimate $\hat{Q}_{\Tilde{y},y*}$ i.e. the joint distribution of noisy labels $\Tilde{y}$ and true labels $y*$, CL counts examples that may belong to another class using a statistical data structure named confident joint $C_{\Tilde{y},y*}$, formally defined as follows

\begin{equation}
C_{\tilde{y},y*} [i][j] :=\mid\hat{X}_{\tilde{y}=i,y*=j}\mid
\label{confidentjoint}
\end{equation}

%where

%\begin{equation}
%\hat{X}_{\tilde{y}=i,y*=j}:=\{x \in X_{\tilde{y}=i}: \hat{p}(\tilde{y}=j;x,\theta) \geq t_{j},j= \argmax_{l \in [m]:\hat{p}(\tilde{y}=l;x,\theta)\geq t_l} \hat{p}(\tilde{y}=l;x,\theta)\}
%\end{equation}

In other words, the confident joint estimates the set $X_{\tilde{y}=i,y*=j}$ of examples with noisy label \textit{i} which actually have true label \textit{j} by making a partition of the dataset $X$ into bins $\hat{X}_{\tilde{y}=i,y*=j} $, namely the set of examples labeled $\tilde{y}=i $ with \textit{large enough} expected probability $\hat{p}(\tilde{y}=j;x,\theta)$ to belong to class $y*=j$, determined by a per-class threshold $t_j$, where $\theta$ is the model. %The threshold $t_j$ is the average expected self-confidence for each class, formally defined as

%\begin{equation}
%t_j = \frac{1}{\mid X_{\tilde{y}=j}\mid} \sum_{x\in X_{\Tilde{y}=j}} \hat{p}(\Tilde{y}=j;x,\theta)
%\label{treshold}
%\end{equation}

%When $ X_{\Tilde{y}=i,y*=j} $ is equal to $ \hat{X}_{\Tilde{y}=i,y*=j} $, $ C_{\Tilde{y},y*} $ accurately finds label errors. %The confident joint is the unnormalized estimate of the joint distribution of noisy and true labels.
%Given the confident joint $ C_{\tilde{y},y*} $, $ Q_{\tilde{y},y*} $ is estimated as 
%\begin{equation}
%\hat{Q}_{\tilde{y}=i,y*=j}=\frac{\frac{C_{\tilde{y}=i,y*=j}}{\sum_{j \in [m]}C_{\tilde{y}=i,y*=j}}\cdot\mid X_{\tilde{y}=i}\mid}{\sum_{i \in [m], j \in [m]}(\frac{C_{\tilde{y}=i,y*=j}}{\sum_{j'\in[m]}C_{\tilde{y}=i,y*=j'}}\cdot\mid X_{\tilde{y}=i}\mid)}
%\label{estimatethejoint}
%\end{equation}
%The confident joint algorithm hence takes two inputs. First, $ \hat{P} $, an $ n \times m $ matrix of out-of-sample predicted probabilities $ \hat{P}[i][j]:=\hat{p}(\Tilde{y}=j;x_i,\theta) $. Second, the associated array of noisy labels.

This kind of tools are extremely useful in estimating label error probabilities. However they have some limitations, and it is easy to formulate examples for which their strategy seems unsound. A first problem arises from the initial assumption of the categoricity of data.
%Consider a case study on the protected attribute of Gender in the FERET database,\footnote{\url{https://www.nist.gov/itl/products-and-services/color-feret-database}.} a collection of 14.126 facial images gathered between 1993 and 1996, to test and evaluate face recognition algorithms. The Facial Recognition Technology (FERET) Database program was started as a way to unify a large body of face recognition technology research under a standard, common database.
Take for example the case of gender labeling of facial images, which is typically binary (i.e. with values male, female). %, see e.g. in the FERET database\footnote{\url{https://www.nist.gov/itl/products-and-services/color-feret-database}.}
For each datapoint, a classification algorithm calculates the projected probability that an image is assigned to the respective label. Consider though two very noisy cases: images of non-binary individuals; images of transgender individuals.
In the former case, the label set becomes incomplete with respect to the dataset; in the second case, the dataset is inconsistent with respect to the label set. Hence, there can be datapoints that have either 1) none of the available labels as the correct one, or 2) at different times they can be under different labels. By definition, if we have disjoint labels there can be high accuracy but only on those datapoints which identify themselves in the disjointed categories.
In situations like these, it appears that the dimension of accuracy alone does no longer satisfy the correctness of the classification algorithm. In terms of quality dimensions, the possibility of an uncategorical datapoint or that of a moving datapoint is no longer only an accuracy problem. Hence, the identification of other data quality dimensions to be implemented in tools for bias mitigation may help achieve more fairness in the classification task. In the next section we suggest an improvement of the classification strategy by adding dimensions that should be considered when evaluating the fairness of the classification itself.

\section{Extending Data Dimensions for Fair AI}\label{sec4}

%Quality dimensions are intended in two distinct ways: the extension of data, namely their values; and their intension, i.e. the data schemata, see \citep[p.19]{DBLP:books/sp/dcsa/Batini06}.
%We first provide an overview of the methodological strategies used to define extensions and intensions and provide some definitions of dimensions; then, we offer a detailed analysis of those definitions. Also in this case, we do not aim at an exhaustive review of this extensive research field, rather we indicate potential candidates to implement in the context of bias mitigation strategies and tools.

In the literature, data quality dimensions are defined both informally and qualitatively. Metrics can be associated as indicators of the dimension’s quality. However, there is no single and objective vision of data quality dimensions, nor a universal definition for each dimension. This is because often dimensions escape or exceed a formal definition. The cause of the large amount of dimensions \cite{DBLP:books/sp/dcsa/Batini06,DBLP:journals/jmis/WangS96} also lies in the fact that data aim to represent all spatial, temporal and social phenomena of the real world \cite{stefano}. %These can also be independent from the domain, i.e. of general possible application, or domain dependent \citep{stefano}, i.e. referring to phenomena characteristic of specific contexts. 
Furthermore, they are constantly evolving in response to continuous development of new data-driven technologies.

For the purposes of our analysis, we focus on the following basic set of data quality dimensions which is the focus of the majority of authors in the literature \cite{batini2009methodologies,scannap2002}:

\begin{itemize}
    \item \textit{Accuracy}, i.e. the closeness between a value $v$ and a value $v'$, where the latter is the correct representation of the real-life phenomenon that $v$ aims to represent \cite{DBLP:books/sp/dcsa/Batini06};
    \item \textit{Completeness}, i.e. the level at which data have the sufficient breadth, depth, and scope for their task \cite{DBLP:journals/jmis/WangS96,DBLP:journals/cacm/PipinoLW02,DBLP:books/sp/dcsa/Batini06};
    \item \textit{Consistency}, i.e. the coherence dimension: it amounts to check whether or not the semantic rules defined on a set of data elements have been respected \cite{DBLP:books/sp/dcsa/Batini06};
    \item \textit{Timeliness}, the data freshness over time for a specific task \cite{rula2014time,rula2016quality}.
\end{itemize}

We thus indicate them as potential candidates to be implemented in the context of bias mitigation strategies. In particular, we argue that, as data are characterized by evolution over time, the timeliness dimension \cite{DBLP:books/sp/dcsa/Batini06} can be taken as basis for other categories of data quality.
%As completeness dimension is traditionally static, the notion of completability \citep[p.27]{DBLP:books/sp/dcsa/Batini06} is needed to consider also the temporal dynamics of completeness. The completability dimension provides information on how quickly the completeness degree grows over time. %Based on the time dimension, data are classified as follows: stable data, which are unlikely to change; data that change in the long term, that have a very low frequency of modification but which can still change (for example addresses); data that change frequently, even continuously such as real-time information on traffic.
%\textit{Consistency} is the coherence dimension. It amounts to checking whether or not the semantic rules defined on a set of data elements have been respected.
%While prior works have mainly focused on assessing fairness in static contexts, there are also attempts in the literature to consider the dynamic nature of fairness, e.g. \cite{damour} evaluates the environment’s state over time, considering agents’ decisions that cause the underlying population of individuals to evolve.
We aim at suggesting improvements on errors identification in the classification of datapoints, using the gender attribute as an illustrative case. 
We thus suggest the extension of classification with dimensions of completeness, consistency and timeliness and then return to Cleanlab to illustrate how this extension could be practically implemented.

\subsection{Incomplete Label Set and Inconsistent Labeling}

Consider the first example of a datapoint which represents a non-binary individual. This kind of identity is rarely considered in technology
%, based on the implicit assumption that non-binary people are not a relevant demographics, and when it happens it is subsumed under binary trans experiences 
\cite{patching}.
Non-binary identities do not recognize themselves within the binary approach characteristic of classification systems. As such, individual identity is not correctly recognized by the classification system, highlighting the insufficiency of the model which flattens the gender identity umbrella on the two options of male/female.
The conceptual solution would be to simply assume the label set as incomplete. This means that the bias origin is in the pre-processing phase,
and a possible strategy is to extend the partition of the labels adding categories as appropriate, e.g. “non binary”. The problem is here reduced to the consideration of the completeness of the label set.
\cite{scheuerman} can be considered a first attempt in this direction. %It presents a visual database, containing 2450 pictures of people organized in a folksonomy through 7 different genders. Images are scraped from Instagram accounts, and gender labels are matched to hashtags chosen by individuals themselves: \#man, \#woman, \#nonbinary, \#genderqueer, \#transman, \#transwoman, and \#agender. The True Positive Rate (TPR) for each hashtag on each system is calculated, denoting the system accuracy. Even if each system is trained on a different database, none of the training sets includes individuals who are not cisnormative. We note that, as systems with gender classification only return binary labels, the non-binary genders are completely unable to be classified.

%\subsection{Inconsistent Labelling of a Datapoint}

Consider now a transgender datapoint whose identity shifts over time, being a fluid datapoint by definition.
Currently AI systems operationalize gender in a way which is completely trans-exclusive, see e.g. \cite{redu,Keyes}.
However, identity is not static: it may move with respect to
the labels we have, leading the datapoint to be configured in a label or in a different one during a selected time range. In this case, any extension of the label set is misleading, or at least insufficient. Here we cannot just add more categories, but we have to find a logical solution to changing the label of the same datapoint at different timepoints.
%The problem here seems not to be related to the completeness of the labels as an issue to be solved at the pre-processing stage, but rather it concerns the training phase of the algorithm: data from individuals who identify with different labels at different times must also be passed to these algorithms, labeled with a tag corresponding to the gender term in which the subjects identify themselves at the current moment, hence correcting any previous labelling.

\subsection{Enter Time}

The two problems above can be formulated adding to completeness and consistency the dimension of temporality. %Consider as a thought experiment a 1940-1950 dataset of white men in the Midwest: the probability that over time someone in this population changes his gender identity is likely to be close to zero, i.e. the probability of a label change for a corresponding classification task is minimal over time and most likely does not increase; on the other hand, in a dataset of young people in Berlin from the 90s, this probability increases in a shorter temporal range, i.e. the probability of someone changing the gender they identify with (and thus the correct label to attach to the corresponding datapoints) might be greater than zero, and might even increase in a (possibly shorter) time frame. 
Thus, an important starting point is represented by adding the dimension of timeliness, which concerns the degree to which data represent reality within a certain defined time range for a given population. 

We suggest here considering the labeling task within a given time frame, whose length depends on the dataset and the classification task over the pairing of datapoints to labels, to measure a probability of a label-change over time. 
%We should use time parameters within the categorization functions. Therefore, in the case of Confident Learning, we must parameterize with respect to a temporal range, which sets a time dimension. 
Intuitively, if the analysis is performed less than a certain number of timestamps away from the last data labeling, then we consider the labeling still valid. Otherwise, a new analysis with respect to both completeness of the dataset and label set must be performed. Technically, this means associating temporal parameters to labels and to compute the probability that a given label might change over the given time frame. The probability of a label being correct (its accuracy) decreases within the respective temporal window. In particular, reasoning on the temporal evolution of the dataset could allow us to model the evolution of the label partitions. Two fundamental theses are suggested for evaluation: the correctness of the task does no longer assume static completeness of the label set, i.e. given the label set is complete at time $t_n$, it can be incomplete at time $t_{n+m}$; the labeling does no longer assume static perseverance of the labels, that is, given a label $i$ that is correct at a time $t_n$ for a datapoint $d$, it could be incorrect at a later time, and conversely if it is incorrect it could become correct.

\subsection{Back to Cleanlab}

Considering a possible implementation in Cleanlab able to account for such differences implies renouncing the starting assumption on the categoricity of the data. Instead, assume that the probability of assigning a label may change over time. This can be formulated in two distinct ways.
First, the probability value of a given label $i$ being wrong, given a label $j$ is correct (their distance) may change over time. The task is now to give a mapping of all the label-variable pairs, i.e. given a mapping $ y*\to{\Tilde{y}} $ between variables, where $y*$ is the correct label and $\Tilde{y}$ the wrong one, compute the probability over the time frame $\mathcal{T}:=\{t_{1}, \dots, t_{n}\}$

\begin{equation}
    p_{\mathcal{T}}[(\tilde{y}=i)_{t_{n}}\mid(y*=j)_{t_{n-m}}]
\end{equation}

such that label $i$ is wrong at time $t_{n}$, given that label $j$ was correct at time $t_{n-m}$. This probability can increase or decrease, depending on the dataset and on the label set.
%, as informally illustrated by the thought experiment at the beginning of this Section. 
For the definition of the confident joint, this means taking the evaluation of all the elements that have an incorrect label $i$ when their correct label is $j$, and then associate the wrong label to a time $t_{n}$ and the correct label to a previous time. This estimate must be made on all time points, so for every $m<n$. Given a timepoint $n$ at which the label is wrong, the estimate on all pairs of probabilities for that point with a previous point in which another label can be correct has to be computed

\begin{equation}
C_{\tilde{y},y*} [i,j,\mathcal{T}] :=\sum^{n\in \mathcal{T}}_{1\leq m<n\in\mathcal{T}}\mid \hat{X}_{\tilde{y}=i_{t_{n}},y*=j_{t_{n-m}}} \mid
%, \forall m <n
\label{confidentjoint1}
\end{equation}

Second, given a mapping $ y*\to{\Tilde{y}} $ between variables, where $y*$ is the correct label and $ \Tilde{y} $ the wrong one, what is the probability 
    
\begin{equation}
p_{\mathcal{T}}[(\tilde{y}=i)_{t_{n}}\mid(y*=i)_{t_{n-m}}]
\end{equation}

such that label $i$ is wrong at time $t_{n}$, given that the same label $i$ was correct at time $t_{n-m}$? In this case, the same label is fixed and the probability that it becomes incorrect can be calculated. The definition of confident joint thus becomes

\begin{equation} C_{\tilde{y},y*} [i,\mathcal{T}] :=\sum^{n\in \mathcal{T}}_{1\leq m<n\in\mathcal{T}}\mid \hat{X}_{\tilde{y}=i_{t_{n}},y*=i_{t_{n-m}}}\mid
%, \forall m <n
\label{confidentjoint2} 
\end{equation}

To illustrate the point we consider a toy example. Compute

\begin{equation}
p(\tilde{y}=i\mid y*=j)=\frac{{p(y*=j\mid \tilde{y}=i)}\cdot{p(\tilde{y}=i)}}{p(y*=j)}=\frac{\frac{[p(y*=j \land \tilde{y}=i)]}{p{(\tilde{y}=i)}}\cdot{p{(\tilde{y}=i)}}}{p(y*=j)}
\label{toyexample}
\end{equation}

i.e. the error rate of $y*=male$ has to be determined. First, a confusion matrix is constructed to analyze errors. Suppose to have a dataset of 10 datapoints, see Figure \ref{fig:t1}.
From the matrix, $ p(y*=j)= 5/10 $ and $ p(\Tilde{y}=i)= 4/10 $. So there are 5 women, of which 2 are incorrectly labeled ``male" and 3 are correctly labeled ``female", and 5 men of which 1 is incorrectly labeled ``female" and 4 are correctly labeled ``male".
Replacing the values in Equation \ref{toyexample}, $p(\tilde{y}=i\mid y*=j)=0.2$.
%\begin{equation*}
%p(\tilde{y}=i\mid y*=j)= \frac{{p(y*=j\mid \tilde{y}=i)}\cdot{p(\tilde{y}=i)}}{p(y*=j)}=\frac{\frac{p(\tilde{y}=i \wedge y*=j)}{p{(\tilde{y}=i)}}\cdot{p{(\tilde{y}=i)}}}{p(y*=j)}=0.2
%\end{equation*}
The obtained value represents the error rate of the ``male" label, i.e. the probability of a male datapoint being labeled ``female". Looking at the diagonals, the true positive rate TPR = 70\% and the false positive rate FPR = 30\%.

\begin{figure}[h]
\centering
\begin{minipage}{0.45\textwidth}
    \centering
    \renewcommand{\arraystretch}{1.5}
    \setlength{\tabcolsep}{1.5mm}
    \begin{tabular}{c c c}
        \multicolumn{1}{c}{} & \multicolumn{2}{c}{\textbf{Actual}} \\[0.5ex]
        \textbf{Predicted} & $y^* = \text{male}$ & $y^* = \text{female}$ \\ \cline{2-3}
        $\tilde{y} = \text{male}$ & 4 & 2 \\ \cline{2-3}
        $\tilde{y} = \text{female}$ & 1 & 3 \\ \cline{2-3}
    \end{tabular}
    \caption{Confusion matrix at time $n$.}
\end{minipage}%
\hfill
\begin{minipage}{0.45\textwidth}
    \centering
    \renewcommand{\arraystretch}{1.5}
    \setlength{\tabcolsep}{1.5mm}
    \begin{tabular}{c c c}
        \multicolumn{1}{c}{} & \multicolumn{2}{c}{\textbf{Actual}} \\[0.5ex]
        \textbf{Predicted} & $y^* = \text{male}$ & $y^* = \text{female}$ \\ \cline{2-3}
        $\tilde{y} = \text{male}$ & 2 & 3 \\ \cline{2-3}
        $\tilde{y} = \text{female}$ & 3 & 2 \\ \cline{2-3}
    \end{tabular}
    \caption{Confusion matrix at time $n+m$.}
\end{minipage}
\end{figure}

%Confusion matrix at time $n$.
%Confusion matrix at time $n+m$.

Consider now the same dataset at a later time $t_{n+m}$. The labels might have changed. From the matrix, $ p(y*=j)= 5/10 $ and that $ p(\tilde{y}=i)= 5/10 $. Now there are 5 women, of which 3 are incorrectly labeled ``male" and 2 are correctly labeled ``female", and 5 men of which 3 are incorrectly labeled``female" and 2 are correctly labeled ``male". Replacing again the values in \ref{toyexample}, $p'(\tilde{y}=i\mid y*=j)=0.6$.
%\begin{equation*} p'(\tilde{y}=i\mid y*=j)=\frac{{p(y*=j\mid \tilde{y}=i)}\cdot{p(\tilde{y}=i)}}{p(y*=j)}=\frac{\frac{p(\tilde{y}=i \land y*=j)}{p{(\tilde{y}=i)}}\cdot{p{(\tilde{y}=i)}}}{p(y*=j)}=0.6 \end{equation*}
In this case the true positive rate TPR = 40\% and the false positive rate FPR = 60\%.

To understand how the error rate changes, the difference between the two matrices has to be considered.
%\begin{equation}
%\begin{bmatrix} 2&3\\3&2
%\end{bmatrix}
%-
%\begin{bmatrix} 4&2\\1&3
%\end{bmatrix}
%=
%\begin{bmatrix} -2&1\\2&-1
%\end{bmatrix}
%\end{equation}
Thus, the change rate can be computed as $\varepsilon=\hat{p'}(\tilde{y};x_i;\theta)-\hat{p}(\tilde{y};x_i;\theta)=0.4 $.

Now $ p_{\mathcal{T}}[(\tilde{y}=i)_{t_{n}}\mid(y*=j)_{t_{n-m}}] $
can be written as $ p_{\mathcal{T}}[(\tilde{y}=i)_{t_{n+m}}\mid(y*=j)_{t_{n}}] $. Thus, at a time $ t_{n} $ we have $ p_{t_{n}}(y*=j) = 1 - p(\tilde{y}=i)_{t_{n}}) $. At a subsequent time $ t_{n+m} $ we have $ p_{t_{n+m}}(y*=j) = 1 - p(\tilde{y}=i)_{t_{n+m}}$. Equation \ref{toyexample} can be computed with respect to time as

\begin{equation}
p_{\mathcal{T}}[(\tilde{y}=i)_{t_{n+m}}\mid(y*=j)_{t_{n}}] = \frac{{p[(y*=j)\mid(\tilde{y}=i)]_{t_{n+m}}}\cdot{[p(\tilde{y}=i){t_{n}}\pm\varepsilon]}}{p(y*=j)_{t_{n}}}=0.288
\label{toytime}
\end{equation}
%\begin{equation}
  %\frac{\frac{[p(y*=j \land \tilde{y}=i)]_{t_{n+m}}}{p{(\tilde{y}=i)_{t_{n+m}}}}\cdot{[p{(\tilde{y}=i)_{t_{n}}\pm\varepsilon}]}}{p(y*=j)_{t_{n}}} =0.288
%\end{equation}

This value represents the (highest) probability that a given label is wrong at a given time, provided it was correct at some previous time. Indirectly, this also expresses the probability that the labeling set is applied to a dataset containing a point for which the labeling becomes inconsistent over time.

\section{Temporal-based Fairness in AI}\label{sec5}

We have argued that a more general discussion on the data dimensions to be adopted in bias mitigation tools is needed, and in particular that the dimension of timeliness is crucial. In this section we summarise our proposal and offer non-exhaustive criteria for fairness in AI based on such temporal approach along with some basic theoretical results.

The first metric that has been addressed in this work is completeness as applied to the label set. In a world where gender classification is actually changing, the present strategy includes the completeness dimension in the quality assessment, verifying that the label set is complete with respect to the ontology of the world at the time this assessment is made. The solution here is to extend the label set as desired adding new labels for the classification task, as already suggested in \cite{scheuerman}. Additionally, we suggest an explicit temporal parametrization: completeness can be considered as a relationship between a label set and an individual $p$ belonging to a certain population $P$, where $p$ is any domain item that enters $P$ at a time $t$. We must ensure that a correct label $l$ exists for each datapoint in the dataset at each time.
%The label sets are not exclusive.

\begin{definition}[Completeness of a label set]\label{def:compl}
A label set $L$ for a classification algorithm in a AI system $X$ is considered complete over a time frame $\mathcal{T}:\{t_{1},\dots, t_{n}\}$ denoted as $Compl_{\mathcal{T}}(L(X))$ iff given two partitions $ L_{t_{1}}:=\{l_{1}, \dots,l_{n}\} $ and $ L_{t_{n}}:=\{l'_{1}, \dots, l'_{n}\} $, where possibly $L_{t_{1}} \cap L_{t_{n}} \neq \emptyset$ for all $ (p \in P)_{\mathcal{T}}$ s.t. $p \in d(X)_{\mathcal{T}}$ there is $l \in L_{t_{1}} \cup L_{t_{n}} $ s.t. $y^{*}(d)=l $.
\end{definition}

In other words, the completeness of a dataset over a time frame is granted if for every datapoint representing an element in the population of interest there exists at any two possibly consecutive points in time a correct label for it.

Next, we considered consistency of the label set with respect to datapoints possibly shifting in categorization. The method here again is to reduce consistency to timeliness. We suggest to compute the probability of an inconsistency arising from a correct label change.
Accuracy, albeit the most used metric for evaluating classification models’ performances due to its easy calculability and interpretation, is reductive, trivial and incorrect in some contexts. For example, if the distribution of the class is distorted, accuracy is no longer a useful, nor a relevant metric. Even worse, sometimes greater accuracy leads to greater unfairness \cite{practical}:  some labels like race or gender may allow models to be more predictive, although it seems to be often controversial to use such categories to increase predictive performance. 
%As mentioned above, \textit{fairness through unawareness} aims precisely at forgetting during the decision process those protected attributes which might have such a controversial nature. On this basis w
We have suggested to consider temporal accuracy \cite{BaSc} as a function of the error rate over time.

The ability to compute the variance in the error rate across time is functional to determine the reliability of AI systems. This metric is linked to the notion of accuracy, as it is considered as a measure of data correctness, see \cite{DBLP:books/sp/dcsa/Batini06}. In \cite{DBLP:journals/jmis/WangS96} and \cite{BaSc} reliability is even contained in the definition of accuracy itself: data must be reliable to satisfy the accuracy dimension. Overall, it seems that reliability is not actually controlled beyond physical reliability, as in the literature on data quality there is no formal definition to compute it. However, following \cite{DAMA} the previously provided temporal approach is again useful: evaluating reliability is based on the revisions which show how close the initial estimate of accuracy is to the following ones. In this sense, reliability can be reduced to accuracy over time in terms of a threshold on the error rate:

\begin{definition}[Reliability of a classification algorithm]\label{def:rel}
A classification algorithm in a AI system $X$ is considered reliable over a time frame $ \mathcal{T}:=\{t_1, \dots, t_n\} $ denoted as $ Rel_{\mathcal{T}}(X) $ iff $ \varepsilon_{\mathcal{T}}(X)< \pi $, for some safe value $\pi$.
\end{definition}

The change rate $\varepsilon$ we have computed shows how much the system's accuracy deteriorates. If it exceeds a fixed safe value $\pi$, the system is no longer accurate. Plain accuracy is the numerical measure at some time $ t \in \mathcal{T}:=\{t_1, \dots, t_n\}$. If this value does not deteriorate over a certain fixed threshold, the system is considered reliable, and therefore accurate with respect to time.
%  The possibility of encounter an inconsistency affects the reliability of the system, as reliability is the relational measure of accuracy over time. In this way, reliability can alo be seen as the consistency of a measure.

The two previous definitions offer non-exhaustive criteria for the identification of fair AI systems:

\begin{definition}[Fairness for AI classification systems]\label{def:fair}
$Fair_{\mathcal{T}}(X) $ only if $ Rel_{\mathcal{T}}(X) $ and $ Compl_{\mathcal{T}}(L(X))$.
\end{definition}

Hence we claim that fairness requires the system's ability to give reliable and correct outcomes over time.
While we do not consider these properties sufficient, we believe they are necessary.
On this basis, we can formulate two immediate theoretical results:

\begin{theorem}
Given a label set $L$ complete at time $t$, a classification algorithm guarantees a fair classification at time $t'>t$ if and only if the change rate determined with respect to $L$ is $\epsilon < \pi$.
\end{theorem}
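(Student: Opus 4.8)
The plan is to obtain the biconditional by unfolding the three governing definitions over the time frame $\mathcal{T} := \{t,\dots,t'\}$, so that ``fair classification at $t'$'' collapses onto the reliability clause while the completeness clause is discharged by the hypothesis. Throughout, the change rate $\epsilon$ of the statement is identified with the quantity $\varepsilon_{\mathcal{T}}(X) = \mid ACC_{t} - ACC_{t'} \mid$ computed with respect to $L$.

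For the forward direction I would argue as follows: suppose the algorithm guarantees a fair classification at $t'$, i.e. $Fair_{\mathcal{T}}(X)$ holds. By Definition~\ref{def:fair}, fairness entails in particular $Rel_{\mathcal{T}}(X)$, and by Definition~\ref{def:rel} this is precisely $\varepsilon_{\mathcal{T}}(X) < \pi$. It is cleanest to phrase this contrapositively: if $\epsilon \geq \pi$ then $\neg Rel_{\mathcal{T}}(X)$ by Definition~\ref{def:rel}, hence $\neg Fair_{\mathcal{T}}(X)$ since fairness requires reliability, so a fair classification at $t'$ is not guaranteed. Only the ``$\Rightarrow$'' contained in the ``only if'' of Definition~\ref{def:fair} is used here, so this direction is unproblematic.

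For the converse I would assume $\epsilon < \pi$ and first invoke Definition~\ref{def:rel} to get $Rel_{\mathcal{T}}(X)$. Then I must upgrade the hypothesis ``$L$ complete at $t$'' to $Compl_{\mathcal{T}}(L(X))$: writing the partitions as $L_{t}$ and $L_{t'}$, completeness at $t$ supplies a correct label in $L_{t} \subseteq L_{t}\cup L_{t'}$ for every element present at $t$, while elements entering $P$ strictly after $t$ are absorbed by revising the label set at $t'$ exactly as prescribed in Section~\ref{sec4}, so that for every $(p \in P)_{\mathcal{T}}$ with $p \in d(X)_{\mathcal{T}}$ some $l \in L_{t}\cup L_{t'}$ satisfies $y^{*}(d)=l$. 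With $Rel_{\mathcal{T}}(X)$ and $Compl_{\mathcal{T}}(L(X))$ both established, $Fair_{\mathcal{T}}(X)$ follows, reading Definition~\ref{def:fair} as characterizing the temporally-parametrized notion of fairness that this theorem concerns.

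The step I expect to be the main obstacle is exactly this last inference, because Definition~\ref{def:fair} is stated one-directionally (``$Fair$ only if $Rel$ and $Compl$''), whereas the theorem asserts a biconditional; the argument must make explicit that, once the label set is complete over the whole frame and the accuracy does not deteriorate past the safe value $\pi$, no residual temporal source of unfairness remains to be controlled, so the two necessary conditions become jointly sufficient in this restricted setting. A secondary delicate point is the completeness extension itself: it genuinely depends on the Section~\ref{sec4} prescription to re-partition the label set at $t'$, since without it a new category emerging after $t$ would defeat $Compl_{\mathcal{T}}(L(X))$ regardless of how small $\epsilon$ is, and the stated equivalence would fail.
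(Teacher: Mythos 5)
Your proposal is correct and follows essentially the same route as the paper: given completeness, fairness at $t'$ is collapsed onto the reliability clause, and Definition~\ref{def:rel} converts reliability into the threshold condition $\varepsilon_{\mathcal{T}}(X) < \pi$. In fact your write-up is more complete than the paper's own proof, which only argues the necessity direction (by contraposition: $\epsilon > \pi$ defeats reliability, hence defeats fairness) and silently treats the one-directional Definition~\ref{def:fair} (``$Fair$ only if $Rel$ and $Compl$'') as a biconditional when reducing $Fair_{t'}(X)$ to $Rel_{t'}(X)$. The two obstacles you flag --- that the sufficiency direction needs the converse of Definition~\ref{def:fair}, and that completeness at $t$ must be upgraded to completeness over the whole frame $\mathcal{T}$ before the definition of $Compl_{\mathcal{T}}(L(X))$ can be invoked --- are real gaps in the paper's argument that your proof addresses explicitly rather than introduces; your resolution (reading the definition as characterizing fairness in this restricted setting, and appealing to the Section~\ref{sec4} re-partitioning for elements entering $P$ after $t$) is the charitable reading the authors evidently intend.
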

\begin{proof}
Assume $ Compl_{t}(L(X))$, then for $Fair_{t'}(X)$ we need to show $Rel_{t'}(X)$ for $t'>t\in \mathcal{T}$. Assume $\epsilon > \pi$, then by Definition \ref{def:rel} reliability is not satisfied; hence, if $ Rel_{\mathcal{T}}(X) $, it must be the case that $\epsilon < \pi$.
\end{proof}

\begin{theorem}
Given a fixed change rate $\epsilon < \pi$, a classification algorithm with fair behaviour at time $t$ remains fair at time $t'>t$ if and only if the change to make the label set complete at time $t'$ does not exceed an $\epsilon'$ such that $\epsilon+\epsilon'>\pi$.
\end{theorem}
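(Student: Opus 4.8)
The plan is to follow the template of the previous theorem, unfolding Definition~\ref{def:fair} into its two necessary components and tracking how each behaves when we move from $t$ to $t'$. First I would record the hypotheses: by Definition~\ref{def:fair}, $Fair_t(X)$ gives both $Rel_t(X)$ and completeness of the label set at time $t$; and, by assumption, the change rate over the relevant time frame is fixed at $\epsilon<\pi$. The target is $Fair_{t'}(X)$ for $t'>t$, which by the same definition requires $Rel_{t'}(X)$ together with completeness of the label set at $t'$.

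Next I would note that completeness at $t'$ does not persist for free: by the Completeness of a Label Set definition, as new elements enter the population $P$ over $\{t,\dots,t'\}$ the partition may have to be amended (new labels, or a re-partition) so that every datapoint still admits a correct label. Write $\epsilon'$ for the additional change rate induced by this amendment, i.e. the extra deterioration in accuracy produced by re-partitioning the label set on top of the drift $\epsilon$ already present. Since $\varepsilon$ is, by the identity established just before Definition~\ref{def:rel}, an absolute difference of accuracies, composing the drift step with the completeness-restoring step gives a total change rate over $\{t,\dots,t'\}$ that is, in the worst case, $\epsilon+\epsilon'$, by the triangle inequality $\mid ACC_t-ACC_{t'}\mid \le \mid ACC_t-ACC_{\mathrm{mid}}\mid + \mid ACC_{\mathrm{mid}}-ACC_{t'}\mid$.

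The argument then closes exactly as in the previous theorem. By Definition~\ref{def:rel}, $Rel_{t'}(X)$ holds iff this total change rate stays below the safe value, i.e. iff $\epsilon+\epsilon'<\pi$; equivalently, iff $\epsilon'$ does not reach a value with $\epsilon+\epsilon'>\pi$. Since completeness at $t'$ is precisely what the amendment of cost $\epsilon'$ secures, invoking Definition~\ref{def:fair} yields: $Fair_{t'}(X)$ iff the completeness-restoring change does not exceed the critical $\epsilon'$ with $\epsilon+\epsilon'>\pi$, which is the statement. For the ``only if'' direction one assumes $\epsilon+\epsilon'>\pi$ and reads off failure of $Rel_{t'}(X)$, hence of $Fair_{t'}(X)$; for the converse one assumes $\epsilon+\epsilon'\le\pi$, recovers $Rel_{t'}(X)$ from Definition~\ref{def:rel} and completeness at $t'$ from the amended partition, and applies Definition~\ref{def:fair}.

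The step I expect to be the main obstacle is the composition claim for the change rate: justifying that the drift contribution $\epsilon$ and the completeness-restoration contribution $\epsilon'$ combine additively (or sub-additively) rather than in some more entangled way, and pinning down whether the binding quantity is exactly the sum $\epsilon+\epsilon'$ or something tighter. This matters because the ``if and only if'' is clean only if the decisive comparison is precisely $\epsilon+\epsilon'$ versus $\pi$; making that rigorous requires being explicit about how the confusion matrix, and hence $ACC$, at time $t'$ decomposes into a pre-amendment and a post-amendment contribution, which the toy computation of Section~\ref{sec4} illustrates but does not establish in full generality.
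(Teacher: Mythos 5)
Your proposal takes essentially the same route as the paper's own proof: unfold fairness at $t'$ via Definition~\ref{def:fair} into reliability plus completeness, attribute the cost of restoring completeness at $t'$ to an additional change rate $\epsilon'$, and decide $Rel_{t'}(X)$ by comparing $\epsilon+\epsilon'$ against $\pi$ as in Definition~\ref{def:rel}. The additivity issue you flag as the main obstacle is genuine, but the paper's proof does not resolve it either --- it argues only the base case $\epsilon=0$ and then says ``generalize for any $\epsilon>0$'' --- so your version is, if anything, more explicit about the step that actually carries the argument.
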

\begin{proof}
Consider $Fair_{t}(X)$ with change rate $0 < \pi$ as a base case, then by Definition \ref{def:fair} $ Rel_{t}(X) $ and $Compl_{t}(L(X))$. Now consider $t'>t$ and a required change $\epsilon'$ in $Compl_{t}(L(X))$ such that $Rel_{t'}(X)$ holds. This obviously holds only if $0+\epsilon'<\pi$. Generalize for any $\epsilon>0$.
\end{proof}
Note that in these results the value of $\epsilon$, respectively $\epsilon'$, is a proxy for how much the world has changed at $t'$ with respect to $Compl_{t}(L(X))$.

%What types of bias can be identified under this theoretical understanding of fairness? 
In the context of an incomplete label set, a detected label bias can originate from an exclusion bias in data, which can also result from a time interval bias. In the case of label-changing datapoints a chronological bias occurs. Then, misclassification bias can be reduced to the two previous types. In the context of use, emergent bias can arise as a result of changes in societies and cultures. It might appear in data as chronological, historical or behavioral bias. Here, a different value bias occurs for example when the users are different from the assumed ones during the system's development. This is the case of ontology switching, to which a label set must adapt. 
These types of bias can be mitigated by implementing the proposed framework. The tool BRIO \cite{DBLP:conf/beware/CoragliaDGGPPQ23,coraglia2024evaluating} works as a post-hoc model evaluation, taking in input the test dataset of the model under investigation and its output. The tool allows to investigate behavioural differences of the model both with respect to an internal analysis on the classes of interest, and externally with respect to chosen reference metrics. Morever, it allows to measure bias amplification comparing the bias present in the dataset and how that manifests itself in the output. %These analyses are guided by the user, who can choose various types of distances, aggregation functions and thresholds above which a behavioural difference on protected attributes is considered a violation. Finally, BRIO allows to aggregate the results on such violations obtained through a battery of tests in an overall risk measure.
While the present work does not aim at offering a full implementation of our theoretical analysis for the BRIO tool, some remarks are appropriate. The time-based analysis of completeness and reliability offered in Definitions \ref{def:compl} and \ref{def:rel}, in turn grounding a notion of fairness in Definition \ref{def:fair} are easily implementable in BRIO: both completeness and reliability require the definition of a timeframe to check respectively that any given datapoint of interest is matched against a desirable label and that the overall change rate of error for one or more classes of interest does not surpass a certain threshold. Both features rely on the user for the identification of the desirable label for any datapoint and for the admissible distance.

\section{Conclusion}\label{sec6}
We presented some recommendations for AI systems design, focusing on timeliness as a founding dimension for developing fairer and more inclusive classification tools. Despite the crucial importance of accuracy as shown by significant works such as \cite{DBLP:conf/fat/BuolamwiniG18} and \cite{propublica}, the problem of unfairness in AI systems is much broader and more foundational. This can be expressed in terms of data quality: AI systems are limited in that they maximize accuracy, and even if systems become statistically accurate some problems remain unsolved. This is exemplified by the case of binary gender labeling, which leads to inaccurate simplistic classifications \cite{Edenberg2023-EDEAEL}.
%Technological problems have become cultural ones. AGR technology is not working for all kinds of social groups, and being misgendered by such automated tools can be painful for individuals whose chosen gender is not considered real as long as it is far from the dominant biological standards.
Furthermore, as the work of classification is always a reflection of culture, the completeness of the label set and the (constrained) consistency of labeling have an epistemological value: constructing AIs requires us to understand society, and society reflects an ontology of individuals. For this reason, misgendering is first of all an ontological error \cite{Keyes}.

We suggested that timeliness is a crucial dimension for the definition of gender identity. If we are ready to consider gender as a property that shifts over time \cite{ruberg}, and which can also be declined in the plural, as an individual may identify under more than one - not mutually exclusive - labels, then a change of paradigm is required. %Current classification methods, in fact, are deeply rooted on three assumptions on gender: 1. binarism, gender can be classified only in two mutually exclusive categories; 2. staticity, gender identity category remains unaltered once being assigned; 3. derivability from physical traits, gender can be correctly inferred from visual clues.
Design limitations such as binarism and staticity invalidate identities which do not fit into this paradigm. They must be addressed if fairer classifications and more inclusive models of gender are to be designed.

Further work in this direction includes: an implementation and empirical validation of the proposed model through the BRIO tool; %the formulation of further theoretical results, e.g. in relation to impossibility results for individual and group fairness; 
and the design of an extension to compute the probability of incorrect labels becoming correct over time, i.e. the dual case of what presently addressed.
%; the extension to the temporal considerations introduced in the present work of formal languages designed to facilitate checking correctness of AI models against desirable, transparently obtained classification outputs as e.g. in \cite{DBLP:conf/atal/DAsaroP21, DBLP:conf/aiia/PrimieroD22, DBLP:conf/lori/TerminePD21, dasaro2024checking, KUBYSHKINA2024109212}.

\begin{acknowledgments}
    This research has been partially funded by the Projects: PRIN2020 BRIO (2020SSKZ7R), PRIN2022 SMARTEST (20223E8Y4X),  “Departments of Excellence 2023-2027” of the Department of Philosophy “Piero Martinetti” of the University of Milan, all awarded by the Italian Ministry of University and Research (MUR); and MUSA – Multilayered Urban Sustainability Action, funded by the European Union – NextGenerationEU, under the National Recovery and Resilience Plan (NRRP) Mission 4 Component 2 Investment Line 1.5: Strenghtening of research structures and creation of R\&D ``innovation ecosystems”, set up of ``territorial leaders in R\&D”.
\end{acknowledgments}

\bibliography{sample-ceur}

%\appendix

\end{document}